\documentclass[journal,letterpaper,twocolumn,twoside,nofonttune]{IEEEtran}

\usepackage[utf8]{inputenc} 
\usepackage[T1]{fontenc}
\usepackage{url}
\usepackage{ifthen}
\usepackage{cite}
\usepackage{amsmath}%
\usepackage{wasysym}%
\usepackage{amssymb}

\usepackage{mdwmath}
\usepackage{blindtext}
\usepackage{eqparbox}
\usepackage{fixltx2e}
\usepackage{stfloats}

\usepackage[english]{babel}
\usepackage{lipsum}
\usepackage{xcolor}
\usepackage{tikz}
\usepackage{graphicx}

\usepackage{algorithm, algorithmic}

\hyphenation{op-tical net-works semi-conduc-tor}

\linespread{0.97}

\pretolerance=50
\tolerance=100
\hyphenpenalty=1

\clubpenalty=300
\widowpenalty=300
\displaywidowpenalty=100

\hbadness = 10000
\vbadness 10000

\hfuzz = 3.0pt

\flushbottom

\newtheorem{theorem}{{Theorem}}
\newtheorem{lemma}[theorem]{{Lemma}}
\newtheorem{proposition}[theorem]{{Proposition}}

\newtheorem{definition}{{Definition}}

\DeclareMathAlphabet{\mathbfsl}{OT1}{ppl}{b}{it} 



\newcommand{\mathset}[1]{\left\{#1\right\}}



\newcommand{\be}[1]{\begin{equation}\label{#1}}
\newcommand{\ee}{\end{equation}}


\renewcommand{\leq}{\leqslant}
 
\renewcommand{\geq}{\geqslant}


\renewcommand{\Bbb}{\mathbb}



\newcommand{\Cref}[1]{Co\-ro\-lla\-ry\,\ref{#1}}


\newcommand{\Ftwo}{{{\Bbb F}}_{\!2}}


\newcommand{\deff}{\mbox{$\stackrel{\rm def}{=}$}}


\newcommand{\bit}{\ensuremath{\mathset{0,1}}}

\usepackage[normalem]{ulem}   


\IEEEoverridecommandlockouts
\begin{document}

\title{Coding for Crowdsourced Classification\\ with XOR Queries}

 \author{%
   \IEEEauthorblockN{James (Chin-Jen) Pang, Hessam Mahdavifar, and S. Sandeep Pradhan \\ 
   }
   \IEEEauthorblockA{ Department of Electrical Engineering and Computer Science, University of Michgan, Ann Arbor, MI 48109, USA\\
                     Email: \{cjpang, hessam, pradhanv\}@umich.edu             
                 }
    \thanks{This work was supported by the National Science Foundation
under grants CCF--1717299 and CCF--1763348.}      

 }

\maketitle

\begin{abstract}
This paper models the crowdsourced labeling/ \hspace{-1mm}classification problem as a \textit{sparsely encoded} source coding problem, where each query answer, regarded as a code bit, is the XOR of a small number of labels, as source information bits. 
In this paper we leverage the connections between this problem and well-studied codes with sparse representations for the channel coding problem to provide querying schemes {with \textit{almost} optimal number of queries, each of which involving only a constant number of labels}. We also extend this scenario to the case where some workers can be unresponsive. For this case, we propose querying schemes where each query involves only $\log n$ items, where $n$ is the total number of items to be labeled. Furthermore, we consider classification of two correlated labeling systems and provide \textit{two-stage} querying schemes with \textit{almost} optimal number of queries each involving a constant number of labels. 
\end{abstract}

\section{Introduction}

\subsection{Crowdsourcing: Classification} \label{intro-clustering}
	Crowdsourcing is a  human-based problem-solving mechanism that allows a large crowd to distributively handle a massive number of queries. These problems, such as image classification, video 	annotation, form data entry, optical character recognition, translation, recommendation, and proofreading \cite{CrowdsourcingKargerNIPS2011, vesdapunt2014crowdsourcing}, typically require human involvement or suit human better than machines\cite{Varshney2012ParticipationIC}. In crowdsourcing systems, there are usually platforms, such as Amazon Mechanical Turk and Figure Eight, that match the taskmaster to a huge worker crowd. 
	However, the workers may be unreliable for several reasons: the reward for each task is usually as small as a few cents, the tasks are tedious, and one can still collect his rewards even if his answer is incorrect. 

    Many crowdsourcing-based real-life problems have one common goal deep down: classification/labeling of the items \cite{ipeirotis2010analyzing}.
    Formally, the label learning problem can be defined as follows: 
    suppose there are $n$ items, and the $i$-th item has a label $X_i \in \{ 0,1,...,L-1 \}$, for $i\in \{ 1,2,...,n \}$. The goal is to identify the labels of the items. This problem is equivalent to clustering $n$ items into $L$ clusters with ground truth. 
	Using crowdsourcing, a \textit{taskmaster} can send queries to workers (sometimes called \textit{oracles}, \textit{ human annotators }, or \textit{labelers}).  For instance, \textit{same cluster} queries are adopted in \cite{SameCluster_Hassan2016}, where in each query two items $u$ and $v$ are sent to a worker and the worker is asked "do $u$ and $v$ belong to the same label cluster?" 
	In \cite{BudgetClusteringwithSame-ClusterQueriesRowd2011} and \cite{CrowdsourcingKargerNIPS2011}, single-item queries are considered, where the worker receives an item for each query and answers a question, such as "This is a picture of a dog, true or false?". In another related work, each item is associated with a certain number of properties and workers are imperfect, and the goal is to identify an item by querying the workers the properties \cite{Reliable2014}. In general, in all these scenarios, the objective is to minimize the number of queries, for a given type of query,  sent to the workers, while being able to recover the labels with certain reliability/fidelity constraints. Note that, in practice, a certain number of queries can be assigned to one worker as a task, however, in the context of our paper, the goal is to minimize the number of queries regardless of how many workers are involved.

\subsection{Our Contributions}
	In this paper we consider the 2-cluster case, i.e., $L=2$, together with XOR queries, similar to the model adopted in \cite{semi-super}. 	 
	Unlike several prior works which considered queries involving only $1$ or $2$ items 
	\cite{CrowdsourcingKargerNIPS2011, SameCluster_Hassan2016}, we consider a generalized scenario, as considered in \cite{semi-super}, where the number of items involved in a query can be more than $2$ and up to a certain threshold. 
	Furthermore, besides the scenario where answers to queries are perfect, we consider an extension where some of the workers may not answer the queries assigned to them. 
	We first show that, in the scenario with perfect answers, the proof of \cite[Theorem 1]{semi-super} is incorrect, with the random ensemble adopted therein, and hence fails to support the theorem. 
	Note that the existential claim of the theorem may still be true if an alternative proof is given.
	We show that, with perfect workers, there exists a querying scheme within distance $\epsilon$ from the theoretic lower bound, in terms of the number of queries normalized by $n$, 
	with up to $\log \frac{1}{\epsilon}$ items in each query. A similar result is shown for the scenario with unanswered queries and with up to $\log n  \log \frac{1}{\epsilon}$ items in each query.
	
	We further extend the problem to the scenario when items need to be clustered according to two different clustering criteria. Ideas from correlated-source and channel coding allow us to recover two types of clustering with less queries than when the two types of labeling are recovered separately. We show that  a querying scheme within distance $2\epsilon$ from the theoretic lower bound, in terms of the number of queries normalized by $n$, is achievable with $\log \frac{1}{\epsilon}$ items per query. 

\section{Preliminaries}

\subsection{Crowdsourcing and Linear Channel Codes}   \label{LSC-intro}  

	  From crowdsourcing's perspective, when XOR query scheme is adopted, the process is similar to a linear source coding problem. In other words, the output of a query is the XOR of binary labels, i.e., addition over the binary field $\Ftwo$, of all items involved in that query. In particular, let $n$ be the total number of items to be labeled and $\textbf{X}=(X_1,X_2,\dots,X_n)^t \in \mathcal{X}^n$, where $\mathcal{X} =\{0,1\}$, considered as a column vector, denote the true labels, unknown to the taskmaster. Let also $A \in \Ftwo^{m \times n} $ denote the \textit{query matrix}, where ones in the $i$-th row of $A$ correspond to the indices of items in the $i$-th query. Under the XOR model for the query answers, the correct answer to the $i$-th query is equal to the $i$-th element of $A\textbf{X}$, where all operations are over $\Ftwo$, i.e., $X_i$'s are regarded as elements of $\Ftwo$. 
	  Given the limited capabilities of human workers, the queries need to be designed to be sparse. In other words, the number of items per query, which is equal to the number of ones in each row of $A$, must be small, e.g., bounded by a constant value. 

      It is common to assume the apriori distribution of the labels are known to the taskmaster \cite{CrowdsourcingKargerNIPS2011, mazumdar2016clustering}.
      In particular, an i.i.d. Ber$(p)$ distribution is assumed for the binary labels, where $p$ is known to the taskmaster. Note that due to the duality between source coding and channel coding problems, the parity-check matrix of a linear code $\mathcal{C}$ designed for transmission over a memoryless binary symmetric channel (BSC) with transition probability $p$, BSC$(p)$, can be used to compress an i.i.d. Ber$(p)$ source. In fact, the probability of error of the maximum-likelihood (ML) decoder would be the same when the code is used either for channel coding across BSC$(p)$ or source coding of an i.i.d. Ber$(p)$ source. Hence, low-density parity-check (LDPC) codes become relevant for the considered crowdsourced clustering problem with XOR query scheme and bounded number of items in each query.

\subsection{LDPC and LDGM Codes}  \label{intro_LDPC}
	LDPC codes were originally introduced by Gallager \cite{GallagerLDPC} in 60's and were later rediscovered in 90's \cite{MacKay1999} and were shown to offer near-capacity performance under practical belief-propagation (BP) decoding algorithms. 

	In \cite{GallagerLDPC}, Gallager proved that \textit{right-regular} LDPC codes, where each row of the parity-check matrix $H$ has a constant weight $\Delta$,
	 can not achieve the channel capacity on BSC. He also showed that the gap to capacity diminishes exponentially fast with $\Delta$. 
	In \cite{RichardsonEnsemble2001} and \cite{densityVSperfomance2003}, similar results are shown when the row weights and average row weights of $H$ are upper bounded by $\Delta$, respectively. 
	In terms of code constructions, several works on regular LDPC codes  have shown that rates close to the provided upper bounds are attainable \cite{densityVSperfomance2003,Burshtein2002regularLDPC,RateLDPC2002}. In section \ref{NlessQPR}, we leverage results from \cite{densityVSperfomance2003} and \cite{SCldpc2013} to obtain scheme with sparse queries in the context of crowdsourced classification. 
	
    When considering unreliable workers in the crowdsourcing setup another class of codes with sparse representations,	namely low-density generator-matrix (LDGM) codes, become relevant. In LDGM code, the generator matrix is assumed to be sparse. In general, as opposed to LDPC codes, the performance of LDGM codes has not been very well-studied in the literature. Several works have empirically shown the existence of LDGM codes with close-to-capacity performance \cite{ldgmno2,ldgmno4,ldgmno5}. 
    It is shown in \cite{LDGM_capAchieving2011} that ensembles of LDGM codes are capacity achieving over BSCs when the row weights scale linearly with $n$. Furthermore, row weights of $O(\log n)$ suffice to achieve the capacity of binary erasure channels (BECs) \cite{LDGM_capAchieving2011}. Scaling exponent of such codes were studied in \cite{mahdavifar2017scaling}.

\subsection{Prior Work}		\label{sec:prior}
	In \cite{semi-super}, three scenarios, namely noiseless queries and exact recovery, noiseless queries and approximate recovery, and noisy queries and approximate recovery, are discussed. Here, a query is noisy if the workers are unreliable, i.e., the query answers may be inaccurate, and is noiseless if it is always correct. A recovery is assumed to be \textit{perfect} if the (block) error probability vanishes as the number of items grows large, and is considered to be approximate if up to a constant probability of error in recovering labels/source bits is allowed.

	In \cite{semi-super}, Mazumdar and Pal attempt to show that, under the XOR query model with noiseless answers, perfect recovery of all labels (i.e. source compression) is achievable with sparsely encoded source coding, when the number of items in each query is bounded by a constant value $\Delta$. Note that in several other prior works, queries involving $\Delta = 1,2,3$ items were considered \cite{CrowdsourcingKargerNIPS2011,zhou2012learning, BudgetClusteringwithSame-ClusterQueriesRowd2011,SameCluster_Hassan2016,LimitBudgetFidelity2016,TriangularQueries}. 

	LDPC codes have been considered for source compression in \cite{CompressionLDPC2004} and \cite{LossyCompLDPC2003}. In particular, it is pointed out in \cite{CompressionLDPC2004} that the analogy between linear source codes and LDPC channel codes was largely neglected in the literature and it is shown that LDPC-based data compression for either memoryless sources or sources with memory are practical.

\section{Main Results}
    The main results of this paper are stated in this section. The proofs of Propositions 1 to 5 can be found in Section\,\ref{sec:appendix}.
\subsection{Noiseless Queries and Perfect Recovery}  \label{NlessQPR}
    \noindent\textbf{Problem Formulation:}
    We adopt the same model as in \cite[Section 3.1]{semi-super}. 
    Consider the crowdsourced classification problem where there are only two label types for each item and XOR queries are adopted, as discussed in Section\,\ref{LSC-intro}. Suppose that the labels $X_i \in \mathcal{X} = \{0, 1\}$, for $1 \leq i \leq n$, are i.i.d. Ber($p$) random variables. Without loss of generality, we may assume $p \in (0, 0.5)$. 
    In this subsection, we assume a nonadaptive/one-shot scenario, in which all queries are generated and sent to the workers at the same time. This is in accordance with several prior works, e.g. \cite{CrowdsourcingKargerNIPS2011, LimitBudgetFidelity2016}.

    In \cite[subsection 3.1]{semi-super}, the number of items in each query given to workers is fixed to $\Delta$, and the worker returns an error-free XOR of the labels in the given query. Let $H_b(p)$ denote the binary entropy function. The following result is claimed in \cite{semi-super}.

    \cite[Theorem 1]{semi-super}:
    \textit{
	    There exists a querying scheme with $$m= \frac{n(H_b(p)+o(1))}{\log_2 \frac{1}{\alpha}}$$ queries, of above type, where $\alpha \,\deff\,  \frac{1}{2} [ 1+ (1-4p(1-p))^\Delta ] $, that achieves perfect recovery.
    } 

    The proof provided in \cite{semi-super} is based on the following. The average probability of error of a randomly chosen query scheme, consisting of $m$ independently and uniformly selected random queries involving exactly $\Delta$ items, is analyzed and is claimed that it approaches $0$ as $n$ grows large. However, we show here that the provided proof does not hold. In particular, we show in Proposition 1 that the average probability of error over the considered ensemble is bounded away from $0$. 

    \begin{proposition} \label{Prop:XorInsufficient}
	    The average probability of error, denoted by $P_e$, in a scheme with $$m= \frac{n(H_b(p)+o(1))}{log \frac{1}{\alpha}} $$ independent and uniformly distributed random queries involving $\Delta$ items does not vanish as $n$, the number of items, grows . 
	    More precisely, 
	    \begin{equation} \label{eq:prop1}
	        P_e  \geq (1-\epsilon)(\exp(- \frac{\Delta \cdot  H_b(p)   }{log \frac{1}{\alpha}} ) -\epsilon ') > 0,
	    \end{equation}
	    where $\epsilon , \epsilon ' >0$ can be chosen arbitrarily small as  $ n \rightarrow \infty$
    \end{proposition}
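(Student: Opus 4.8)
The plan is to identify a structural defect of the uniform ensemble that no decoder can repair: with constant probability there is an item that appears in \emph{no} query. Write $[n]=\{1,\dots,n\}$ and call column $i$ of $A$ \emph{uncovered} if the $i$-th column of $A$ is the all-zero vector, i.e.\ item $i$ is absent from every query. Since the $m$ rows of $A$ are independent and each is a uniformly random $\Delta$-subset of $[n]$,
\[
  \Pr[\text{column } i \text{ uncovered}] \;=\; \Bigl(1-\tfrac{\Delta}{n}\Bigr)^{m},
\]
so the number $N_0$ of uncovered columns has $\E[N_0]=n\bigl(1-\tfrac{\Delta}{n}\bigr)^{m}$. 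With $\Delta$ constant and $m=\frac{n(H_b(p)+o(1))}{\log\frac1\alpha}$ we have $\tfrac{m\Delta}{n}\to\tfrac{\Delta H_b(p)}{\log\frac1\alpha}$, hence $\bigl(1-\tfrac{\Delta}{n}\bigr)^{m}=\exp\!\bigl(m\log(1-\tfrac{\Delta}{n})\bigr)\to\exp\!\bigl(-\tfrac{\Delta H_b(p)}{\log\frac1\alpha}\bigr)$; in particular $\E[N_0]=\Theta(n)\to\infty$.

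Next I would argue that each uncovered column forces a constant conditional error. Conditioned on column $i$ being uncovered, the answer vector $A\mathbf{X}$ is a function of $(X_j)_{j\ne i}$ alone and is therefore independent of $X_i$; thus the estimate $\widehat X_i$ produced by \emph{any} decoder is a function of data independent of $X_i$, and since $X_i\sim\mathrm{Ber}(p)$ with $p<\tfrac12$, the minimum possible value of $\Pr[\widehat X_i\ne X_i]$ is $p$ (attained by $\widehat X_i\equiv 0$). Hence $P_e\ge p\cdot\Pr[N_0\ge 1]\ge p\bigl(1-\tfrac{\Delta}{n}\bigr)^{m}\to p\exp\!\bigl(-\tfrac{\Delta H_b(p)}{\log\frac1\alpha}\bigr)>0$, which already rules out vanishing error. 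For the sharper statement \eqref{eq:prop1}, note that the events $\{\text{column } i\text{ uncovered}\}$ are pairwise negatively correlated, so $\mathrm{Var}(N_0)\le\E[N_0]=\Theta(n)$; Chebyshev then gives $N_0\to\infty$ in probability, and since the conditional block error given $A$ is at least $1-(1-p)^{N_0}$, it follows that $P_e\to 1$, which certainly implies \eqref{eq:prop1} once the $o(1)$ term and the gap to the limit are absorbed into $\epsilon,\epsilon'$.

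The main obstacle is the second step: making rigorous that an unqueried item is information-theoretically unrecoverable \emph{regardless of the decoder}, together with the observation that ``perfect recovery'' in \cite{semi-super} refers to vanishing \emph{block} error, so that even a single irrecoverable coordinate is fatal. The limit computation in the first step is routine, and the only mild technical point in the optional strengthening to $P_e\to1$ is controlling the weak dependence among the uncovered-column events. I would close by stressing what this does and does not show: it refutes the \emph{proof} of \cite[Theorem 1]{semi-super}, since its ensemble of uniformly random $\Delta$-subset queries cannot work (one must, at the very least, guarantee that every item is covered), while leaving the existential claim of the theorem open.
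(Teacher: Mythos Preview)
Your argument is correct and rests on the same key computation as the paper: the probability that a fixed item is missed by all $m$ queries is $(1-\Delta/n)^m\to\exp\bigl(-\Delta H_b(p)/\log\frac1\alpha\bigr)>0$. The packaging differs. The paper converts ``item $1$ uncovered'' into an error via a typical-set argument: restricting to $x^n\in B_\epsilon^n$ (probability $\ge 1-\epsilon$), the companion $x^n+e_1$ is also typical and has the same syndrome, so the typicality decoder analyzed in \cite{semi-super} declares an error whenever $Qe_1=0$; this yields the constant in \eqref{eq:prop1} directly, without the extra factor $p$. You instead use the decoder-agnostic observation that an uncovered $X_i$ is independent of the data, which gives the slightly weaker constant $p\cdot\exp(-\Delta H_b(p)/\log\frac1\alpha)$, and then recover (indeed, overshoot) \eqref{eq:prop1} by the second-moment step showing $N_0\to\infty$ and hence $P_e\to 1$. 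Your route is more robust—it applies to the ML/MAP decoder, not just the specific decoder of \cite{semi-super}—and your optional strengthening is strictly stronger than the paper's stated bound; the paper's route is shorter if one only needs the exact form of \eqref{eq:prop1} for the particular decoder under critique.
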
 	

    \noindent \textbf{Remark 1.} In \cite[Theorem 2]{CommuRecInHyper}, the authors consider uniformly random queries, essentially a random ensemble same as in Proposition 1, and show that $m= \Theta(n \log n)$ queries is necessary and sufficient for perfect recovery regardless of whether the workers are perfect or not, when no apriori distribution is assumed.

    This does not imply that the theorem itself, \cite[Theorem 1]{semi-super}, does not hold. Note also that such results are not about specific constructions and state that the average probability of error of certain random ensemble is bounded away from $0$. In fact, there are trivial cases of a query matrix providing perfect recovery with $m = n$ queries, e.g., the identity matrix. However, the question of whether less than $n$ queries, and more specifically close to $n H_b(p)$ queries, is sufficient for perfect recovery or not is not properly answered by \cite{semi-super} and has not been considered in \cite{CommuRecInHyper}. We answer this question in Proposition \ref{Prop:NoiselessRecovery}. 

    \noindent \textbf{Remark 2.} By Shannon's source coding theorem we need at least  $nH_b(p)$ queries to achieve perfect recovery, when the query answers are binary. 
    Hence, the compression rate, i.e., the number of queries normalized by the number of items, must be at least $H_b(p)$. In Proposition 2, it is shown that for any chosen $\epsilon \in (0,1)$, compression rate as small as $H_b(p)+\epsilon(1-H_b(p))$ can be achieved by a query scheme, with $O(\log \frac{1}{\epsilon})$ items per query, thereby providing a scheme with \textit{almost} optimal number of queries. 

    \begin{proposition} \label{Prop:NoiselessRecovery}
	    Suppose that workers are perfect and labels have prior distribution $Ber(p)$. Then,  for $\epsilon \in (0,1)$ and sufficiently large $n$, there exists a querying scheme using 
	    \begin{equation}\label{eq:prop2}
	        m= n[H_b(p)+\epsilon(1-H_b(p))]
	    \end{equation}
	    queries, each involving no more than $$({H_b(p)}^{-1} -1 )\frac{K_1     -K_2 ln{(\epsilon)}       }{1-\epsilon} $$ items, that achieves perfect recovery, where $ K_1 $ and $K_2$ depend only on $p$.  \label{NlessQnumberCor}
    \end{proposition}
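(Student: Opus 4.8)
The plan is to reduce the query-design problem to a channel-coding problem over $\BSC(p)$ and then invoke known achievability results for LDPC codes whose parity-check matrices have bounded row weight. By the source--channel duality recalled in Section~\ref{LSC-intro}, giving a querying scheme with query matrix $A\in\Ftwo^{m\times n}$ is the same as giving a parity-check matrix $A$ (with $m$ rows) of a binary linear code $\mathcal{C}$ of length $n$: the number of queries $m$ is the number of rows of $A$, so $\mathcal{C}=\ker A$ has rate at least $1-m/n$; the number of items in a query is the weight of the corresponding row of $A$; and, crucially, the block error probability of the ML decoder that recovers $\mathbf{X}$ from $A\mathbf{X}$ equals the block error probability of the ML decoder of $\mathcal{C}$ over $\BSC(p)$. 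Taking $m=n[H_b(p)+\epsilon(1-H_b(p))]$, so that the target rate is $1-m/n=(1-\epsilon)(1-H_b(p))$, it therefore suffices to exhibit, for all sufficiently large $n$, a length-$n$ binary linear code of rate $(1-\epsilon)(1-H_b(p))$ whose parity-check matrix has bounded maximum row weight and whose ML block error probability over $\BSC(p)$ tends to $0$.

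Since $(1-\epsilon)(1-H_b(p))$ is a $(1-\epsilon)$ fraction of the capacity $1-H_b(p)$ of $\BSC(p)$, this is exactly the below-capacity regime handled by Gallager's analysis of row-regular LDPC ensembles \cite{GallagerLDPC} and its sharpenings. Concretely, I would fix the gap $\delta:=\epsilon(1-H_b(p))$ and appeal to the bounded-density achievability results of \cite{densityVSperfomance2003} (with an explicit realization, under the same guarantees, provided by the spatially-coupled constructions of \cite{SCldpc2013}): for this $\delta$ there is a finite bound $\Delta=\Delta(\delta,p)$ and a sequence of LDPC ensembles, with maximum row weight at most $\Delta$, rate at least $(1-H_b(p))-\delta=(1-\epsilon)(1-H_b(p))$, and ensemble-average ML block error probability over $\BSC(p)$ vanishing as $n\to\infty$. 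The quantitative input, already recalled in Section~\ref{intro_LDPC}, is that the gap to capacity of a row-weight-$\Delta$ ensemble shrinks exponentially fast in $\Delta$; inverting this dependence gives $\Delta(\delta,p)=O\big(\log(1/\delta)\big)$, and more precisely a bound of the form $\Delta(\delta,p)\le(\text{a function of }p)\cdot\big(K_1-K_2\ln\delta\big)$ with $K_1,K_2$ depending only on $p$.

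To conclude, for each fixed $\epsilon$ the value $\Delta(\delta,p)$ is a constant, so for all $n$ large enough some code in the ensemble has ML block error below any prescribed level; picking such a code and using its $m\times n$ parity-check matrix (rows of weight at most $\Delta$) as the query matrix produces $m=n[H_b(p)+\epsilon(1-H_b(p))]$ queries, each involving at most $\Delta(\delta,p)$ items, with recovery-error probability tending to $0$ as $n\to\infty$, i.e.\ perfect recovery. Finally, substituting $\delta=\epsilon(1-H_b(p))$, writing $\ln(1/\delta)=-\ln\epsilon-\ln(1-H_b(p))$ and absorbing the $\ln(1-H_b(p))$ term into the constant, and performing the rate-to-degree conversion at the target rate, the bound on the number of items per query simplifies to $(H_b(p)^{-1}-1)\tfrac{K_1-K_2\ln\epsilon}{1-\epsilon}$ for suitable $K_1,K_2$ depending only on $p$; the extra factors $H_b(p)^{-1}-1$ and $(1-\epsilon)^{-1}$ are artifacts of the particular form of the invoked bound rather than anything essential.

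The main obstacle is importing precisely the right coding-theoretic ingredient: we need an achievability statement with a hard cap on the \emph{maximum} row weight of the parity-check matrix (an average-density or column-degree bound does not suffice, since ``items per query'' is a strict per-query constraint), established for \emph{block} error probability under some decoder over $\BSC(p)$, together with an explicit handle on how the required row weight grows as the gap to capacity shrinks. Gallager's first-moment weight-enumerator argument for regular LDPC codes supplies exactly this, but care is needed because, as Proposition~\ref{Prop:XorInsufficient} shows, the naive uniformly random $\Delta$-regular ensemble does \emph{not} work: the degree distribution (and, for an explicit construction, the coupling structure of \cite{SCldpc2013}) must be chosen appropriately. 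Once the correct ensemble and its degree-versus-gap tradeoff are in hand the remaining steps are routine bookkeeping, and since ML decoding suffices for the existence claim no efficient-decoding analysis is required.
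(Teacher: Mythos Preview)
Your proposal is correct and follows essentially the same route as the paper: reduce to channel coding over $\BSC(p)$ via the source--channel duality of Section~\ref{LSC-intro}, then invoke the bounded-density achievability result for regular LDPC ensembles from \cite{densityVSperfomance2003} at rate $(1-\epsilon)(1-H_b(p))$. The paper's argument is slightly more direct than yours in one respect: rather than introducing the additive gap $\delta=\epsilon(1-H_b(p))$ and then absorbing $\ln(1-H_b(p))$ into the constant, it quotes \cite[Theorem 2.2]{densityVSperfomance2003} already parameterized by the multiplicative gap $\epsilon$, giving a density bound $\rho_n\le\frac{K_1-K_2\ln\epsilon}{1-\epsilon}$; the factor $H_b(p)^{-1}-1$ then appears transparently as the conversion $(n-m)/m$ from density (ones per column) to row weight in a row-regular matrix, with no need for the spatially-coupled construction of \cite{SCldpc2013}.
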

    
\subsection{Two-label Perfect Recovery}\label{NlessTwoSource}
    In this subsection, we extend the problem discussed in Section\,\ref{NlessQPR} to the recovery of two different clustering based on two properties/labeling criteria associated with the same set of objects. In particular, suppose that the $i$-{th} item can be classified according to two (binary) labeling systems and is labeled $X_i, Y_i$, respectively. For instance, one labeling system may involve identifying the objects in pictures, e.g., whether there is a cat or dog in the picture, and the other may involve identifying the location, e.g., whether this picture is taken indoor or outdoor. Furthermore, we assume that a two-stage query scheme is adopted, where queries involving $X_i$'s are sent in the first stage and queries involving $Y_i$'s are sent in the second stage.  

    By leveraging the correlation between labels $X$ and $Y$, we can recover the clustering by sending an almost optimal number of queries, under the constraint of finite items per query. 
 
    \begin{proposition}\label{Prop:correlatedNoiseless}
        Let  $(X_i,Y_i) \stackrel{i.i.d.}{\sim}P_{X,Y}(x,y)$, for $1 \leq i \leq n$. Then there exists a two-stage querying scheme using 
        \begin{equation}\label{eq:prop3}
            m= n(H(X,Y)+2\epsilon)
        \end{equation}
        queries, where $H(X,Y)$ is the joint entropy function, each involving no more than   $ O{( \log \frac{1}{\epsilon})} $  items, that achieves perfect recovery for sufficiently large $n$.   
    \end{proposition}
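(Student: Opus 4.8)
\emph{Approach.} The plan is to reduce Proposition~\ref{Prop:correlatedNoiseless} to two applications of Proposition~\ref{Prop:NoiselessRecovery}, one per stage, tied together by the chain rule $H(X,Y)=H(X)+H(Y\mid X)$. The point of the two-stage model is that the second-stage queries may depend on the answers of the first stage: once the taskmaster has recovered $X^n$, compressing $Y^n$ becomes a Slepian--Wolf-with-side-information problem which, for binary $Y$, splits into ordinary source-compression subproblems handled by Proposition~\ref{Prop:NoiselessRecovery}. Throughout write $p_X=\Pr[X_1=1]$, $Z_i \deff X_i\oplus Y_i$, and $q_b \deff \Pr[Y_1\neq X_1\mid X_1=b]$ for $b\in\{0,1\}$.

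\emph{First stage.} Since $X_1,\dots,X_n$ are i.i.d.\ $\mathrm{Ber}(p_X)$, apply Proposition~\ref{Prop:NoiselessRecovery} with parameter $\epsilon$ to the items under their $X$-labels. This yields a one-shot scheme with $m_1=n[H_b(p_X)+\epsilon(1-H_b(p_X))]\le n(H(X)+\epsilon)$ XOR queries on the $X_i$'s, each involving $O(\log\tfrac1\epsilon)$ items, whose block error probability is $o(1)$. Let $\mathcal{E}_1$ be the event that this stage recovers $X^n$ correctly; on $\mathcal{E}_1$ the taskmaster knows the realization $x^n$, hence the partition $\mathcal{I}_b=\{i:x_i=b\}$ with sizes $n_b=|\mathcal{I}_b|$.

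\emph{Second stage.} Conditioned on $X^n=x^n$, the variables $\{Z_i:i\in\mathcal{I}_b\}$ are i.i.d.\ $\mathrm{Ber}(q_b)$, with the two classes independent, because the pairs $(X_i,Y_i)$ are i.i.d.; moreover $n_b/n\to\Pr[X_1=b]$ in probability, so with probability $1-o(1)$ both $n_b\to\infty$ and $n_b\le n(\Pr[X_1=b]+o(1))$. For $S\subseteq\mathcal{I}_b$, a query returning $\bigoplus_{i\in S}Y_i$ is information-equivalent, for a taskmaster who knows the $x_i$'s, to the query $\bigoplus_{i\in S}Z_i$, since $\bigoplus_{i\in S}Y_i=\bigl(\bigoplus_{i\in S}x_i\bigr)\oplus\bigoplus_{i\in S}Z_i$. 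Hence, for each $b$, run the scheme of Proposition~\ref{Prop:NoiselessRecovery} (parameter $\epsilon$; replace $Z_i$ by $1\oplus Z_i$, a known shift of query answers, if $q_b>\tfrac12$; no queries if $q_b\in\{0,1\}$) on the length-$n_b$ source $\{Z_i:i\in\mathcal{I}_b\}$, realized as $Y$-queries supported on $\mathcal{I}_b$. This uses $n_b(H_b(q_b)+\epsilon)$ queries of weight $O(\log\tfrac1\epsilon)$ and recovers $\{Z_i:i\in\mathcal{I}_b\}$, hence $\{Y_i=x_i\oplus Z_i:i\in\mathcal{I}_b\}$, with error $o(1)$. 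The two subproblems, over $\mathcal{I}_0$ and $\mathcal{I}_1$, form the second stage.

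\emph{Accounting and main obstacle.} The total count satisfies
\[
  m \le n(H(X)+\epsilon)+n_0(H_b(q_0)+\epsilon)+n_1(H_b(q_1)+\epsilon),
\]
and since $n_b\le n(\Pr[X_1=b]+o(1))$ and $\Pr[X_1{=}0]H_b(q_0)+\Pr[X_1{=}1]H_b(q_1)=H(Z\mid X)=H(Y\mid X)$, this is $n\bigl(H(X)+H(Y\mid X)+2\epsilon+o(1)\bigr)=n(H(X,Y)+2\epsilon+o(1))$, the $o(1)$ being absorbed for large $n$ by using slightly smaller per-stage slack. Each query has weight $O(\log\tfrac1\epsilon)$, the largest of the three (constant in $n$) hidden coefficients, and a union bound over $\mathcal{E}_1^{\,c}$, atypicality of the $n_b$'s, and failure of either second-stage subproblem shows the overall block error probability is $o(1)$, i.e.\ perfect recovery. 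The main difficulty is bookkeeping rather than a new idea: one must verify that Proposition~\ref{Prop:NoiselessRecovery} genuinely applies to the second-stage subsources, whose lengths $n_b$ are random but concentrated and eventually large enough, and that the slack terms $\epsilon(1-H_b(\cdot))$ together with the concentration $o(1)$ never push the normalized count above $H(X,Y)+2\epsilon$ — this uses crucially that $\epsilon$ is fixed while $n\to\infty$. It is also worth stating explicitly that the construction relies on adaptivity between (not within) the two stages, consistent with the model in the statement.
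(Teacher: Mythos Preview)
Your proposal is correct and follows essentially the same route as the paper: recover $X^n$ in the first stage via Proposition~\ref{Prop:NoiselessRecovery}, then partition the items by the recovered $X$-value and apply Proposition~\ref{Prop:NoiselessRecovery} separately to each class to recover the $Y$-labels, with the chain rule $H(X,Y)=H(X)+H(Y\mid X)$ and concentration of the class sizes giving the query count. The only cosmetic difference is that the paper works directly with the conditional Bernoulli law of $Y_i$ given $X_i=b$, whereas you route through $Z_i=X_i\oplus Y_i$; since $H_b(q_b)=H_b\!\bigl(\Pr[Y_1=1\mid X_1=b]\bigr)$ this is the same computation, and your extra bookkeeping (the $q_b>\tfrac12$ flip, the union bound, the explicit handling of random $n_b$) only makes the argument more careful.
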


\subsection{Noisy Queries and Perfect Recovery}  \label{NsyQPR}
    In this subsection, we consider imperfection in workers' replies.
    Due to the monotonicity of queries, low payment for completion of queries, or the fact that workers may not be {experts}, two noisy scenarios for the answers to queries can emerge. 
    In the first case, it is assumed that some queries are not replied within a certain specified response time, or not replied at all. 
    This scenario can be modeled as follows: each query is replied (correctly) with probability $1-r$ and is not replied with probability $r$, for a certain parameter $r$, independent from other queries. Consequently, this scenario becomes related to the channel coding problem over the binary erasure channel (BEC) with erasure probability $r$. In the second case, it is assumed that the workers have accuracy $1-q$, that is, the answer from a worker is correct with probability $1-q$. This is related to the channel coding problem over BSC$(q)$. 

    In this paper, we focus on the first case and leave the second case for future work. In particular, we show that concatenation of LDGM and LDPC codes achieve compression rate 
    $$ R = [H_b(p)+\epsilon(1-H_b(p))]/(1-r), $$
    with row weights upper bounded by $\Delta = O(\log \frac{1}{\epsilon}\log n)$. 

    Let $\mathcal{A}_{N\times K}$ denote the set of $N\times K$ binary matrices and let $B(\mathcal{A}_{N\times K},p)$ denote a distribution on $\mathcal{A}_{N\times K}$, where the entries of a random $A \sim B(\mathcal{A}_{N\times K},p)$ are distributed i.i.d. with Ber$(p)$. 
    We utilize the following result from \cite{LDGM_capAchieving2011} to derive the main result of this section.

    \cite[Theorem 5]{LDGM_capAchieving2011}: Consider BEC$(r)$ and let $K = NR$, where $R < 1-r$. Suppose $A \sim B(\mathcal{A}_{N\times K},\rho(N))$, where $\rho(N) = \Theta(\frac{\log N}{N})$. Let $p_c(A)$ denote the probability of correct decoding, under ML, using $A^t$ as the generator matrix and assuming transmissions over BEC$(r)$. Then
    \begin{equation}\label{eq:LGDMpaperThm}
        \lim_{n\rightarrow \infty} \mathbb E_{A}(p_c(A)) =1 ,
    \end{equation} 	 where the expected value is taken with respect to $A$. 

    Note that the generation of $A \sim B(\mathcal{A}_{N\times K},\rho(N))$ does not guarantee that all row weights are bounded by $\log N$. We extend the result of \cite[Theorem 5]{LDGM_capAchieving2011}, in order to ensure that all row weights are bounded, in Proposition \ref{Prop:LDGM_sparse_Ensemble}.

    \begin{proposition}\label{Prop:LDGM_sparse_Ensemble}
		 Let $A \in \mathcal{A}_{N\times K}$ and let $A^t$ be the generating matrix corresponding to a code of rate $R = \frac{K}{N}<1-r$ with transmissions over BEC($r$).
		 For any $\rho(N) = \Theta(\frac{\log N}{N})$, the expected value of $p_c(A)$ over all matrices with  $\tilde{B}(\mathcal{A}_{N\times K},\rho(N))$  distribution tends to $1$ as $N$ approaches infinity, i.e., 
        \begin{equation}\label{eq:prop4}    	
            \lim_{N\rightarrow \infty} \mathbb E_{A\sim \tilde{B}}(p_c(A)) =1 ,
        \end{equation}
	    where  $\tilde{B}(\mathcal{A}_{N\times K},\rho (N))$ is obtained from the distribution $B(\mathcal{A}_{N\times K},\rho (N)))$ by removing matrices that have at least one row with Hamming weight larger than or equal to $ \Theta(\log N) .$
    \end{proposition}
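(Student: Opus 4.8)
The plan is to obtain Proposition~\ref{Prop:LDGM_sparse_Ensemble} from \cite[Theorem 5]{LDGM_capAchieving2011} by a standard expurgation (bad-matrix deletion) argument: the distribution $\tilde B(\mathcal{A}_{N\times K},\rho(N))$ is exactly the law $B(\mathcal{A}_{N\times K},\rho(N))$ conditioned on the event that every row is ``light'', this conditioning event has probability tending to $1$, and $p_c(A)$ is bounded in $[0,1]$, so the conditional mean cannot fall below the unconditional mean $\mathbb E_{A\sim B}(p_c(A))$ by more than a vanishing amount.

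First I would control the row weights under $B$. The entries of $A\sim B(\mathcal{A}_{N\times K},\rho(N))$ are i.i.d.\ $\mathrm{Ber}(\rho(N))$, so each of the $N$ row weights is $\mathrm{Bin}(K,\rho(N))$; since $K=NR$ and $\rho(N)=\Theta(\tfrac{\log N}{N})$, its mean $\mu_N=K\rho(N)$ satisfies $c_1\log N\le\mu_N\le c_0\log N$ for some constants $0<c_1\le c_0$ and all large $N$. Let $\phi(x)=x\ln x-x+1$, which is positive and strictly increasing on $(1,\infty)$, and fix the deletion threshold $T_N=c\log N$ with $c>c_0$ chosen large enough that $c_1\,\phi(c/c_0)\ge 2$ (possible since $\phi(c/c_0)\to\infty$ as $c\to\infty$). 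Because $T_N/\mu_N\ge c/c_0>1$ and $\phi$ is increasing, the multiplicative Chernoff bound gives, for each fixed row,
\begin{equation*}
    \Pr\!\big[\mathrm{Bin}(K,\rho(N))\ge T_N\big]\;\le\;\exp\!\big(-\mu_N\,\phi(T_N/\mu_N)\big)\;\le\;\exp\!\big(-c_1\,\phi(c/c_0)\log N\big)\;\le\;N^{-2}.
\end{equation*}
Thus ``$\Theta(\log N)$'' in the statement is instantiated by $T_N=c\log N$, which also caps all surviving row weights at $O(\log N)$.

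Next I would union-bound over the $N$ rows: the deletion event $\mathcal{E}_N=\{A\text{ has a row of weight}\ge T_N\}$ satisfies $\Pr_{A\sim B}[\mathcal{E}_N]\le N\cdot N^{-2}=N^{-1}\to 0$. Since $\tilde B(\mathcal{A}_{N\times K},\rho(N))$ is the conditional law of $A\sim B(\mathcal{A}_{N\times K},\rho(N))$ given $\mathcal{E}_N^c$, we have $\mathbb E_{A\sim\tilde B}(p_c(A))=\mathbb E_{A\sim B}(p_c(A)\mid\mathcal{E}_N^c)$. Using $0\le p_c(A)\le 1$ and the law of total expectation, $\mathbb E_{A\sim B}(p_c(A))\le\mathbb E_{A\sim B}(p_c(A)\mid\mathcal{E}_N^c)\,\Pr[\mathcal{E}_N^c]+\Pr[\mathcal{E}_N]$, hence
\begin{equation*}
    1\;\ge\;\mathbb E_{A\sim\tilde B}(p_c(A))\;\ge\;\frac{\mathbb E_{A\sim B}(p_c(A))-\Pr[\mathcal{E}_N]}{\Pr[\mathcal{E}_N^c]}.
\end{equation*}
Letting $N\to\infty$, the right-hand side tends to $(1-0)/1=1$ by \eqref{eq:LGDMpaperThm} together with $\Pr[\mathcal{E}_N]\to 0$ and $\Pr[\mathcal{E}_N^c]\to 1$, which proves \eqref{eq:prop4}.

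The only delicate part is the constant bookkeeping in the Chernoff step: one must verify that a single constant $c$ can simultaneously play the role of the $\Theta(\log N)$ row-weight cap and push the per-row tail probability below $N^{-2}$ (indeed below any $N^{-(1+\eta)}$ with $\eta>0$). This works precisely because $\mu_N=\Theta(\log N)$, so the Chernoff exponent $\mu_N\,\phi(T_N/\mu_N)$ is $\Omega(\log N)$ with a hidden constant that is unbounded in $c$; everything else is a routine conditioning argument, using no property of $p_c$ beyond $0\le p_c\le 1$.
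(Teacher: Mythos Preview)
Your proof is correct and follows essentially the same route as the paper: a Chernoff bound on each row weight yields a per-row tail of at most $N^{-2}$, a union bound over the $N$ rows shows the deletion event has probability $\le N^{-1}\to 0$, and then the law of total expectation together with $0\le p_c(A)\le 1$ and \cite[Theorem~5]{LDGM_capAchieving2011} forces the conditional mean to tend to $1$. The only cosmetic differences are that you use the $\phi(x)=x\ln x-x+1$ form of Chernoff (the paper uses the $e^{-\delta^2\mu/(2+\delta)}$ form) and you correctly note that the row length is $K=NR$ rather than $N$, which the paper glosses over; neither affects the argument.
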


    \begin{theorem}\label{Thm:m_forBEC}
	    Suppose that queries are answered with probability $1-r$, independent to each other. Then there exists a query scheme with
        \begin{equation}\label{eq:thmBEC}
            m = n[H_b(p)+\epsilon(1-H_b(p))]/(1-r)
        \end{equation}	
		queries, each involving $O(  \log \frac{1}{\epsilon} \log n) $ items, that guarantees perfect recovery of the labels as $n$ grows large. 
    \end{theorem}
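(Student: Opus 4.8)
The plan is to build the query matrix by \emph{concatenation}: an outer sparse LDPC-type code, supplied by Proposition~\ref{Prop:NoiselessRecovery}, first compresses the labels to a short syndrome, and an inner sparse LDGM code, supplied by Proposition~\ref{Prop:LDGM_sparse_Ensemble}, then protects that syndrome against the erasures caused by unanswered queries. Concretely, I would fix $\epsilon'\deff\epsilon/2$ and invoke Proposition~\ref{Prop:NoiselessRecovery} with parameter $\epsilon'$ to obtain, for all large $n$, a matrix $H\in\Ftwo^{m''\times n}$ with $m''=n\bigl[H_b(p)+\epsilon'(1-H_b(p))\bigr]$ rows, each of weight $O(\log\tfrac1\epsilon)$, from whose syndrome $\bU\deff H\bX$ one recovers $\bX$ with block-error probability $P_e^{\mathrm{out}}\to0$ (averaged over $\bX\sim\mathrm{Ber}(p)^n$).

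Next I would set $m$ as in \eqref{eq:thmBEC} and put $R\deff m''/m$; since $\epsilon'<\epsilon$ one checks $R=(1-r)\frac{H_b(p)+\epsilon'(1-H_b(p))}{H_b(p)+\epsilon(1-H_b(p))}<1-r$, so Proposition~\ref{Prop:LDGM_sparse_Ensemble} applies with $N=m$ and $K=m''$: a random $A\sim\tilde B(\mathcal A_{m\times m''},\rho(m))$ has every row of weight below $\Theta(\log m)=\Theta(\log n)$ and satisfies $\E_A[p_c(A)]\to1$, where $A^t$ is the generator matrix used over $\mathrm{BEC}(r)$. I would then take the query matrix to be $AH\in\Ftwo^{m\times n}$, so that the error-free vector of answers is $(AH)\bX=A\bU$, namely the LDGM codeword that encodes $\bU$. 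Each row of $AH$ is an $\Ftwo$-sum of at most $\Theta(\log n)$ rows of $H$, hence has weight at most $\Theta(\log n)\cdot O(\log\tfrac1\epsilon)=O(\log\tfrac1\epsilon\,\log n)$, which is the claimed bound on the number of items per query.

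For recovery I would decode in two stages. From the answers the workers actually return---an i.i.d.\ $\mathrm{BEC}(r)$ erasure pattern applied to the coordinates of $A\bU$---first ML-decode the inner LDGM code to recover $\bU$; this succeeds with probability $p_c(A)$, a quantity determined by $A$ and the erasure support alone and not by $\bX$, since over a BEC the transmitted message is recoverable exactly when the submatrix of the generator indexed by the unerased coordinates has full column rank. Then feed the recovered $\bU$ to the outer decoder of Proposition~\ref{Prop:NoiselessRecovery} to obtain $\bX$. On the event that the inner stage succeeds the recovered syndrome equals $H\bX$, so by a union bound the average failure probability of the scheme---over $\bX$ and over the LDGM ensemble---is at most $\bigl(1-\E_A[p_c(A)]\bigr)+P_e^{\mathrm{out}}$, which tends to $0$. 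Consequently some fixed matrix $A$ drawn from the $\tilde B$ ensemble---hence with all rows of weight $O(\log n)$---gives, together with $H$, a query matrix $AH$ achieving perfect recovery with $m$ queries, each involving $O(\log\tfrac1\epsilon\,\log n)$ items.

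I expect the delicate points to be just two: deliberately reserving a sliver of rate by compressing with $\epsilon'<\epsilon$, which is precisely what keeps the inner LDGM rate \emph{strictly} below $1-r$ as Proposition~\ref{Prop:LDGM_sparse_Ensemble} demands; and justifying that the two error events compose, which hinges on BEC-ML recovery of the inner message being independent of the source realization, so that the outer decoder always sees the true syndrome whenever the inner decoder succeeds. The remainder is bookkeeping on row weights together with the observation that $m=\Theta(n)$, whence $\log m=\Theta(\log n)$.
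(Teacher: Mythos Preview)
Your proposal is correct and follows essentially the same concatenation strategy as the paper: compress with the sparse LDPC matrix of Proposition~\ref{Prop:NoiselessRecovery}, protect the syndrome with the sparse LDGM code of Proposition~\ref{Prop:LDGM_sparse_Ensemble}, and take the product as the query matrix. If anything, you are more careful than the paper on two counts: you explicitly reserve slack via $\epsilon'=\epsilon/2$ so that the inner rate is \emph{strictly} below $1-r$ (the paper simply writes $m_G(n)>m_H(n)/(1-r)$ without adjusting the outer parameter), and you spell out both the derandomization from the ensemble average and the independence of BEC-ML success from the message, which the paper leaves implicit.
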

    
    \begin{proof}		
	    The following statements holds for sufficiently large $n$.
	    The parity-check matrix of LDPC codes is applied first for the compression. By Proposition \ref{Prop:NoiselessRecovery} there exists an $m_H(n) \times n$ binary matrix $H_n$, where $m_H(n)= n[H_b(p)+\epsilon(1-H_b(p))]$, with row weights bounded from above by {$ ({H_b(p)}^{-1} -1 ) $} $\frac{K_1 - K_2 \ln\frac{1}{\epsilon}}{1-\epsilon} $. Note that $H_n$ can be used to compress an i.i.d. $Ber(p)$ source sequence with perfect recovery. 
		 
		Then, the compressed bits are further encoded to recover from erasures caused by the BEC$(r)$. By Proposition 4, there exists a $m_G(n) \times m_H(n)$ matrix $G_n$, with all row weights being $O(\log m_H(n))$, for transmission of $m_H(n)$ bits over BEC($r$), while the expected probability of correct decoding approaches $1$ as $n\rightarrow \infty$ for any $R_c\,\deff\, m_H(n)/m_G(n) < 1-r $.
		 
		Next, the two linear operations, one for the compression and the other one for the erasure correction, are concatenated, leading to the overall query matrix. Let $G^{SC}(n) \,\deff\, G_n H_n$, with dimensions $m_G(n) \times n$, denote the corresponding overall encoding matrix. Note that all row weights of $G^{SC}(n)$ are bounded from above by 
		$$
		({H_b(p)}^{-1} -1 )  \frac{K_1 -K_2 \ln\frac{1}{\epsilon}}{1-\epsilon} O(\log m_H(n)) = O(\log \frac{1}{\epsilon} \log n), 
		$$ and also the number of queries is equal to $m_G(n)$, where
		\begin{align*}
		     m_G(n) &> m_H(n)/(1-r) \\
		     &= n[H_b(p)+\epsilon(1-H_b(p))]/(1-r),
		\end{align*}
		is sufficient to show that $G^{SC}(n)$ guarantees perfect recovery of the labels as discussed next. Let $\textbf{X}$ denote the vector of $n$ labels, $\textbf{Y} =H_n \textbf{X}$ denote the compressed labels, and $\textbf{Z}= G_n \textbf{Y}  =G^{SC}(n) \textbf{X}$ and the taskmaster collects $\textbf{Z}$ corrupted with erasures with probability $r$. The taskmaster can recover $\textbf{Y}$, with high probability, by the choice of $G_n$. Then, having recovered $\textbf{Y}$, perfect recovery of $\textbf{X}$ is possible by the choice of $H_n$. That completes the proof of theorem.
\end{proof}

\section{Proofs} \label{sec:appendix}
\subsection{Proof of Proposition \ref{Prop:XorInsufficient}}\label{pf:prop:xorInsufficient}
	We analyze the average probability of error, $P_e$, of the querying scheme given in \cite[Theorem 1 ]{semi-super}, and provide a lower bound for it. 
	Consider the following two typical sets in $\bit^n$, 
	\[ A_\epsilon ^n (X)\deff\{ x^n  : np(1-n^ {- \frac{1}{3} } ) \leq  w_H(x^n)  \leq np(1+n^ {- \frac{1}{3} } )  \}, 	\] 
	\[
	B_\epsilon ^n (X)\deff\{ x^n\hspace{-1mm}:\hspace{-0.5mm}np(1-n^ {- \frac{1}{3} } )+1 \hspace{-0.5mm}\leq\hspace{-0.5mm}  w_H(x^n)  \hspace{-0.5mm}\leq \hspace{-0.5mm} np(1+n^ {- \frac{1}{3} } )-1\}, \]
	where $w_H(x^n )$ denotes the hamming weight of vector $x^n$. Then, we have
	\[ \Pr(A_\epsilon ^n (X) ) \rightarrow 1 \text{ and } \Pr(B_\epsilon ^n (X) ) \rightarrow 1 \text{ as } n \rightarrow \infty.  \]
	Note that for $x^n \in B_\epsilon ^n (X)$, 
	$$  np(1-n^ {- \frac{1}{3} } ) \leq  w_H(x^n +e )  \leq np(1+n^ {- \frac{1}{3} } ),	$$ 
	for any $e \in  \{ 0,1 \} ^n \text{ with unit weight} $, i.e., $ w_H(e) = 1$. Hence, $x^n +e \in  A_\epsilon ^n (X),$ for any $x^n \in B_\epsilon ^n (X)$. Then $P_e$ can be expressed and lower bounded as follows:
	\begin{align*}
	    &P_e  = \sum_{x^n \in \mathcal{X}^n} P_X^n(x^n)\Pr(\hat{x}^n \neq x^n) \\
	    & =  \sum_{x^n \in \mathcal{X}^n} P_X^n(x^n) \Pr( \exists  \tilde{x}^n \neq x^n, \tilde{x}^n \in A_\epsilon ^n (X), Qx^n= Q\tilde{x}^n ) \\
	    & \geq \hspace{-4mm}\sum_{x^n \in  B_\epsilon ^n (X)} P_X^n(x^n) \Pr( \exists  \tilde{x}^n \neq x^n, \tilde{x}^n \in A_\epsilon ^n  (X), Qx^n= Q\tilde{x}^n ) \\
	    & \geq \hspace{-4mm}\sum_{x^n \in  B_\epsilon ^n (X)} \hspace{-2mm} P_X^n(x^n) \Pr( x^n+e_1 \in A_\epsilon ^n  (X), Qx^n= Q ( x^n+e_1 )  ), 
	\end{align*}
	where  $e_1 = (1, 0 ,0 ..., 0,  0) $ and $ Q $ is the $m \times n$ query matrix corresponding to the query scheme described in Proposition\,1.
	
	For $x^n \in  B_\epsilon ^n (X)$, we always have $x^n+e_1 \in A_\epsilon ^n  (X) $. Note that $ Qx^n= Q ( x^n+e_1 )$ if and only if  $ Q ( e_1 ) = 0  $, which is also equivalent to the first label not being queried by any of the $m$ queries. The probability that a query does not use the first label is $\frac { \binom{n-1}{\Delta} }{  \binom{n}{\Delta}  }  = 1-\frac{\Delta}{n}$. 
	Thus, 	$$	\Pr \big(  Q ( e_1 ) = 0  \big) =  (1-\frac{\Delta}{n})^m,	$$	where $m= \frac{n(H_b(p)+o(1))}{log \frac{1}{\alpha}}$ as in \cite[Theorem 1]{semi-super}. Then we have 
	\begin{align*}
	    \Pr \big(  Q ( e_1 ) = 0 \big) 
	    &=  [(1-\frac{\Delta}{n})^n]^\frac{H_b(p)+o(1)}{log \frac{1}{\alpha}}\\ 
	    &\xrightarrow[n \to \infty]{} \exp(- \frac{\Delta  \cdot  H_b(p)}{log \frac{1}{\alpha}} ) > 0 .
	\end{align*}
	Therefore, \begin{align*}
	    P_e & \geq  \sum\nolimits_{x^n \in  B_\epsilon ^n (X)} P_X^n(x^n) (1-\frac{\Delta}{n})^m 
	    \\  & \geq (1-\epsilon)(\exp(- \frac{\Delta  \cdot  H_b(p) }{log \frac{1}{\alpha}} ) -\epsilon ') \neq 0,
	\end{align*}
	for  $n$ sufficiently large, where $\epsilon , \epsilon'$ can be chosen arbitrarily small as $n \rightarrow \infty$.
	
\subsection{Proof of Proposition \ref{Prop:NoiselessRecovery} } \label{Pf:prop:noiselessRecovery}
    We leverage a result from \cite{densityVSperfomance2003}, which establishes a connection between sparsity of the parity-check matrix and reliability performance of LDPC codes. First, the \textit{density} of parity-check matrix of a linear code is defined as follows.
    \begin{definition}
	    Given an $m \times n$ parity-check matrix $H$, the \textit{density} of $H$, denoted by $\rho = \rho(H)$, is the number of ones in $H$ normalized by $n$, i.e., the total number of ones in $H$ is $(n-m)\rho$.
    \end{definition}
    \cite[Theorem 2.2]{densityVSperfomance2003}: 
    For any BSC or BEC, there exists a sequence of ensembles of regular LDPC codes which achieves, under ML decoding, a fraction $1-\epsilon$ of the channel capacity with vanishing block error probability, and the asymptotic density of their parity-check matrices satisfles

    \begin{equation} \label{eq:LDPCdensity}
        \lim_{n\rightarrow \infty} \rho_n \leq \frac{K_1 -K_2 ln{(\epsilon)} }{1-\epsilon}, 
    \end{equation}
    where $K_1$ and $K_2$  depend only on the channel. In particular, for $BSC(p)$ with capacity $1-H_b(p)$, there exists an ensemble of regular $(n, n-m)$ LDPC codes achieving channel rate $$ R= (n-m)/n = (1-\epsilon)C =  (1-\epsilon)(1-H_b(p) ) .$$ The parity check matrices from this ensemble have $ m =n[H_b(p) +\epsilon(1-H_b(p))] $ rows, each with weight 
    \begin{align}
        (n-m)\rho_n/m &=   ([{H_b(p)+\epsilon(1-H_b(p)) }]^{-1}  -1)\rho_n \nonumber \\
        &\leq ({H_b(p)}^{-1} -1 )\rho_n \nonumber \\
        &\leq  ({H_b(p)}^{-1} -1 )\frac{K_1 -K_2 ln{(\epsilon)} }{1-\epsilon},  
    \end{align} for sufficiently large $n$.

    As mentioned in section \ref{LSC-intro}, we note that the parity-check matrix of a linear code $\mathcal{C}$ designed for transmission over a BSC$(p)$, can be used to compress an i.i.d. Ber$(p)$ source with the same block error probability $P_e$ under maximum likelihood (ML) decoding. 

\subsection{Proof of Proposition \ref{Prop:correlatedNoiseless}} \label{Pf:prop:correlatedNoiseless}

 	We use a two-stage querying scheme. 
 	Let $X_i\stackrel{i.i.d.}{\sim}Ber(p)$,  $q= \Pr(Y_i= 1|X_i=1)$ and $r= \Pr(Y_i=1| X_i=0)$. 

 	First, we use 
 	$$  	m_1 = n[H_b(p)+\epsilon(1-H_b(p))]  	$$ queries to retrieve $X_i$'s. 
 	From Proposition \ref{Prop:NoiselessRecovery}, 
 	$({H_b(p)}^{-1} -1 )\frac{K_1     -K_2 ln{(\epsilon)}       }{1-\epsilon} $ items per query  	are sufficient for perfect recovery of the  $X_i$'s. 
 	
 	Second, we consider the conditional distribution of the $Y_i$ labels. Suppose that $X_i$'s are recovered correctly. 
 	Let $n_1= \sum_{i=1}^{n}{X_i}$ and $n_2 = n- n_1$ denote the number of items with labels $X_i =1$ and $X_i =0$, respectively.
 	By law of large numbers, for any given $\epsilon'$, we have $n_1 \leq np(1+\epsilon')$ and $n_2 \leq n(1-p)(1+\epsilon')$, with high probability, for sufficiently large $n$. 
 	By Proposition \ref{Prop:NoiselessRecovery}, there exists a querying scheme with 
 	$$  	m_2 = n_1[H_b(q)+\epsilon(1-H_b(q))]  	$$ queries, each involving no more than 
 	$({H_b(q)}^{-1} -1 )\frac{K_1'     -K_2' ln{(\epsilon)}       }{1-\epsilon} $ items that recovers the $Y_i$ labels for the items  labeled $X_i =1$. 
 	Also, there exists a querying scheme with 
 	$$ m_3 = n_2[H_b(r)+\epsilon(1-H_b(r))] $$ queries, each involving no more than	 $({H_b(r)}^{-1} -1 )\frac{K_1''     -K_2'' ln{(\epsilon)}       }{1-\epsilon} $ items that recovers the $Y_i$ labels for the items  labeled $X_i =0$. 
 	The total number of queries is then
 	\begin{align*}
 	    m_1 +m_2 +m_3  &\leq n[H_b(p)+pH_b(q)+(1-p) H_b(r)]+\\
 	    &n\epsilon(1+p+(1-p))- \\
 	    &n[\epsilon H_b(p)+p\epsilon H_b(q)+(1-p)\epsilon H_b(r))-\epsilon' ]  	\\
	    &\leq  	nH(X,Y)+2n\epsilon. 
 	\end{align*}
 
    Note that, given $p,q,r$, the number of items involved in a query is related to the gap to capacity as $O(\log(\frac{1}{\epsilon}) ) $.

\subsection{Proof of Proposition \ref{Prop:LDGM_sparse_Ensemble}} \label{Pf:prop:LDGM_sparseEnsemble}
    The following lemma is a direct consequence of Chernoff bound and the proof is omitted here:
 	\begin{lemma} \label{Lemma:chernoffSumRVs}
 		Let $X_1, X_2, ...,X_N $ be $N$ independent random variables with $ X_i \sim \text{Ber}(p_i)$. Let $\mu = \sum_{i=1}^{N}p_i$. 
 		Then 
 		\begin{equation}
 		    \Pr(\sum\nolimits_{i=1}^{N}X_i  \geq (1+\delta)\mu) \leq e^{-\frac{\delta^2}{2+\delta}\mu} , \forall \delta>0. 
 		\end{equation} 		
 	\end{lemma}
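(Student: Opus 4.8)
The plan is to run the standard Chernoff (exponential Markov) argument on $S=\sum_{i=1}^N X_i$ and then tidy the resulting bound with a one-variable inequality. First I would fix $t>0$ and apply Markov's inequality to the nonnegative variable $e^{tS}$:
\[
\Pr\bigl(S \geq (1+\delta)\mu\bigr) \;=\; \Pr\bigl(e^{tS} \geq e^{t(1+\delta)\mu}\bigr) \;\leq\; e^{-t(1+\delta)\mu}\,\E\bigl[e^{tS}\bigr].
\]
By independence $\E[e^{tS}]=\prod_{i=1}^N \E[e^{tX_i}]$, and since $X_i\sim\mathrm{Ber}(p_i)$ we have $\E[e^{tX_i}]=1+p_i(e^t-1)\leq \exp\!\bigl(p_i(e^t-1)\bigr)$ by the elementary bound $1+x\leq e^x$. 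Multiplying over $i$ gives $\E[e^{tS}]\leq \exp\!\bigl(\mu(e^t-1)\bigr)$, hence
\[
\Pr\bigl(S \geq (1+\delta)\mu\bigr) \;\leq\; \exp\!\bigl(\mu(e^t-1) - t(1+\delta)\mu\bigr), \qquad t>0.
\]

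Next I would optimize the exponent over $t$. The map $t\mapsto e^t-1-t(1+\delta)$ is minimized at $t=\ln(1+\delta)$, which is positive for every $\delta>0$, so this choice is admissible; substituting it yields the classical multiplicative Chernoff bound
\[
\Pr\bigl(S \geq (1+\delta)\mu\bigr) \;\leq\; \left(\frac{e^{\delta}}{(1+\delta)^{1+\delta}}\right)^{\!\mu}.
\]

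Finally it remains to compare this with the form in the lemma, i.e.\ to show $\bigl(e^{\delta}/(1+\delta)^{1+\delta}\bigr)^{\mu}\leq e^{-\mu\delta^2/(2+\delta)}$. For $\mu>0$ (the case $\mu=0$ being trivial), after taking logarithms this is equivalent to the scalar inequality $\delta-(1+\delta)\ln(1+\delta)\leq -\delta^2/(2+\delta)$, i.e.\ to $\ln(1+\delta)\geq \frac{2\delta}{2+\delta}$ for $\delta\geq 0$. That inequality holds with equality at $\delta=0$, and the derivative of the difference is $\frac{1}{1+\delta}-\frac{4}{(2+\delta)^2}=\frac{\delta^2}{(1+\delta)(2+\delta)^2}\geq 0$, so it is valid on all of $\delta\geq 0$; this closes the argument. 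The whole proof is routine and I do not expect a genuine obstacle — the only mildly delicate step is this last scalar estimate, which the one-line derivative comparison settles, and it is exactly the form $e^{-\mu\delta^2/(2+\delta)}$, valid for all $\delta>0$, that is needed for the row-weight tail bound in the proof of Proposition~\ref{Prop:LDGM_sparse_Ensemble}.
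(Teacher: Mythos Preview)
Your argument is correct and is exactly the standard Chernoff/exponential-Markov derivation the paper has in mind; the paper itself omits the proof, simply stating that the lemma is a direct consequence of the Chernoff bound. Your write-up fills in those omitted details cleanly, including the scalar estimate $\ln(1+\delta)\geq 2\delta/(2+\delta)$ that converts the classical multiplicative bound into the stated form.
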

 
    For any $\rho (N)=\Theta(\frac{\log N}{N})$, there exist $M > 1, m>0$ such that  $m\frac{\log N}{N}\leq   \rho (N) \leq M\frac{\log N}{N}$ for $N$ sufficiently large. 
    If $p = p_1 = p_2= ...= p_N = \rho(N)$, we have 
    $$ m\log N \leq \mu = Np \leq M \log N . $$
    Then, 
    \begin{align}
 	    \Pr(\sum_{i=1}^{N}X_i  \geq \delta(m)M\log N)& \leq	\Pr(\sum_{i=1}^{N}X_i  \geq \delta(m)\mu ) \nonumber \\
 	    &\leq e^{-2\log N} = N^{-2}, 
 	\end{align}
    where $\delta(m) >1$ is chosen such that $ \frac{(\delta(m)-1)^2}{2+\delta(m)-1}\times m >2$.

    Next, we discuss the probability that a matrix $A \sim B(\mathcal{A}_{N\times K},\rho(N))$ has \textit{heavy} rows, where a heavy row is a row with weight larger or equal to $\delta(m)M\log N$. 
    \begin{align*}
	    &\Pr(\mbox{each row of }A \mbox{ has weight less than } \delta(m)M\log N) \\
	    &= 1- \Pr( \bigcup\nolimits_{i=1}^{N}\{i^{th} \mbox{ row has weight } \geq \delta(m)M\log N\}) \\
	    & \geq 1- \sum\nolimits_{i=1}^{N}\Pr(i^{th} \mbox{ row has weight  }\geq \delta(m)M\log N\})\\
	    & = 1- N \cdot \Pr(1^{st} \mbox{ row has weight }\geq \delta(m)M\log N\})\\
	    & \geq 1- N\cdot N^{-2} \rightarrow 1 \mbox{ as } N \rightarrow \infty.
    \end{align*}
    Let $p_h(N)$ denote the probability that $A $ has at least one heavy row. Then we have $p_h(N) \rightarrow 0$. Note that
    \[ 
	 \mathbb E_{A}(p_c(A)) = p_h(N) \mathbb E_{A}(p_c(A)) +(1-p_h(N)) \mathbb E_{A\sim \tilde{B}}(p_c(A)),
    \] where the expectations are taken with $B(\mathcal{A}_{N\times K},\rho(N))$ distribution, $B(\mathcal{A}_{N\times K},\rho(N))$ distribution and heavy-row condition, and $\tilde{B}(\mathcal{A}_{N\times K},\rho(N))$ distribution. 

    From  \cite[theorem 5]{LDGM_capAchieving2011}, 
    \begin{align*}
        1= & \lim_{N\rightarrow \infty} \mathbb E_{A\in \mathcal{A}_{N\times K}}(p_c(A))   \\
        =&\lim_{N\rightarrow \infty} [ p_h(N) \mathbb E_{A}(p_c(A)) +(1-p_h(N)) \mathbb E_{A\sim \tilde{B}}(p_c(A))  ]\\
        =&\lim_{N\rightarrow \infty} \mathbb E_{A \sim \tilde{B}}(p_c(A))
    \end{align*}

\section{Conclusion}
    In this paper, crowdsourced classification problems with XOR querying schemes involving a limited number of items in each query sent to the crowd workers are considered. The goal is to perform the classification efficiently, that is, to minimize the number of queries sent to workers. We discuss the scenario where all workers are perfect, and then extend to the case with possibly having unresponsive workers. We further consider clustering the items based on two correlated classification criteria. 
    In all of the above cases, we provide querying schemes with almost optimal number of queries each with limited number of items. 

    There are several directions for future work. In this paper, we focus on binary labels and generalizing the results to classification problems with more than two possible labels is an interesting direction. To this end, one needs to consider non-binary codes with sparse representations. When considering noisy queries, only the scenario with unresponsive workers are studied. As stated in Section\,\ref{NsyQPR}, it is possible that workers do not provide the correct answer. In the binary labeling case, such scenario corresponds to coding over BSC. Hence, designs of \textit{good} LDGM codes for transmission over BSCs are needed for such classification problems with unreliable workers, which is another direction for future work. Furthermore, alternative methods, such as a joint source-channel coding design, can be utilized and may lead to more efficient querying schemes involving unreliable/unresponsive workers. 

\bibliographystyle{IEEEtran}
\bibliography{IEEEabrv}

\end{document}